\newtheorem{definition}{Definition}[]
\newtheorem{corollary}{Corollary}[]
\newtheorem{prop}{Proposition}[]
\newcommand{\X}{\mathcal{X}}
\newcommand{\expeq}{\overset{\mathbb{E}}{=}}
\newcommand{\nexpeq}{\overset{\mathbb{E}}{\ne}}
\newcommand{\cramer}{\text{Cram\'er }}
\newcommand{\expect}{\mathbb{E}}
\DeclareMathOperator*{\argmax}{arg\,max}
\def \cA {\mathcal{A}}
\def \cD {\mathcal{D}}
\def \cX {\mathcal{X}}
\def \cZ {\mathcal{Z}}
\def \cQ {\mathcal{Q}}
\def \bN {\mathbb{N}}
\def \bR {\mathbb{R}}
\def \Var{\text{Var}}
\def \grad {\nabla}
\def \Rmax {R_{\textsc{max}}}
\begin{document}
%
\title{A Comparative Analysis of Expected and Distributional Reinforcement Learning}

 \author{Clare Lyle,\textsuperscript{1}
  Pablo Samuel Castro, \textsuperscript{2}
Marc G. Bellemare \textsuperscript{2}\\
\textsuperscript{1}{University of Oxford (work done while at Google Brain)}\\
\textsuperscript{2}{Google Brain}\\
clare.lyle@cs.ox.ac.uk,
 psc@google.com,
 bellemare@google.com}
\maketitle
\begin{abstract}

Since their introduction a year ago, distributional approaches to reinforcement learning (distributional RL) have produced strong results relative to the standard approach which models expected values (expected RL). However, aside from convergence guarantees, there have been few theoretical results investigating the reasons behind the improvements distributional RL provides. In this paper we begin the investigation into this fundamental question by analyzing the differences in the tabular, linear approximation, and non-linear approximation settings. We prove that in many realizations of the tabular and linear approximation settings, distributional RL behaves exactly the same as expected RL. In cases where the two methods behave differently, distributional RL can in fact hurt performance when it does not induce identical behaviour. We then continue with an empirical analysis comparing distributional and expected RL methods in control settings with non-linear approximators to tease apart where the improvements from distributional RL methods are coming from.
\end{abstract}

\section{Introduction}
The distributional perspective, in which one models the distribution of returns from a state instead of only its expected value, was recently introduced by \cite{bellemare2017}. The first distributional reinforcement learning algorithm, C51, saw dramatic improvements in performance in many Atari 2600 games when compared to an algorithm that only modelled expected values \cite{bellemare2017}. Since then, additional distributional algorithms have been proposed, such as quantile regression \cite{quantile_regression} and implicit quantile networks \cite{iqn}, with many of these improving on the results of C51. The abundance of empirical results make it hard to dispute that taking the distributional perspective is helpful in deep reinforcement learning problems, but theoretical motivation for this perspective is comparatively scarce.  Possible reasons for this include the following, proposed by \cite{bellemare2017} .
\begin{enumerate}
\item \textbf{Reduced chattering:} modeling a distribution may reduce prediction variance, which may help in policy iteration.
\item \textbf{Improved optimization behaviour:} distributions may present a more stable learning target, or in some cases (e.g. the softmax distribution used in the C51 algorithm) have a regularizing effect in optimization for neural networks.
\item \textbf{Auxiliary tasks:} the distribution offers a richer set of predictions for learning, serving as a set of auxiliary tasks which is tightly coupled to the reward.
\end{enumerate}
Initial efforts to provide a theoretical framework for the analysis of distributional algorithms demonstrated their convergence properties \cite{understanding-rl}, and did not directly compare their expected performance to expected algorithms. Indeed, even experimental results supporting the distributional perspective have largely been restricted to the deep reinforcement learning setting, and it is not clear whether the benefits of the distributional perspective also hold in simpler tasks. In this paper we continue lifting the veil on this mystery by investigating the behavioural differences between distributional and expected RL, and whether these behavioural differences necessarily result in an advantage for distributional methods.

\section{Background}
We model the reinforcement learning problem as an agent interacting with an environment so as to maximize cumulative discounted reward. We formalize the notion of an environment with a Markov Decision Process (MDP) defined as the tuple $(\mathcal{X}, \cA, R, P, \gamma)$, where $\mathcal{X}$ denotes the state space, $\cA$ the set of possible actions, $R:\cX\times\cA\rightarrow Dist([-R_{MAX}, R_{MAX}])$ is a stochastic reward function mapping state-action pairs to a distribution over a set of bounded rewards, $P$ the transition probability kernel, and $\gamma \in [0, 1)$ the discount factor.

We denote by $\pi:\cX\rightarrow Dist(\cA)$ a stochastic policy mapping states to a distribution over actions (i.e. $\pi(a|x)$ is the agent's probability of choosing action $a$ in state $x$). We will use the notation $Q$ to refer to state-action value function, which has the type $Q:\cX \times \cA \rightarrow \mathbb{R}$. The value of a specific policy $\pi$ is given by the value function $Q^\pi$, defined as the discounted sum of expected future rewards after choosing action $a$ from state $s$ and then following $\pi$
\begin{equation*}\label{qdef}
    Q^\pi(x,a) := \mathbb{E}_{\pi, P} \bigg [\sum_{t=0}^\infty \gamma^t R(x_t, a_t) \bigg | x_0 = x, a_0 = a \bigg ].
\end{equation*}
One can also express this value as the fixed point of the Bellman operator $T^\pi$ \cite{bellman}, defined as 
\begin{align*}\label{expbellman}
\small
T^\pi Q(x,a) := &\expect[R(x,a)]
\\ &+ \gamma \sum_{x', a'} P(x'|x,a)\pi(a'|x')Q(x', a').
\end{align*}
The Bellman operator $T^\pi$ depends on the policy $\pi$, and is used in \textit{policy evaluation} \cite{suttonandbarto}. When we seek to improve the current policy, we enter the \textit{control} setting. In this setting, we modify the previous Bellman operator to obtain the Bellman optimality operator $T^*$, given by
\begin{equation*}\label{expbellmanopt}
T^* Q(x,a) := \expect[ R(x,a) ] + \gamma \sum_{x'}P(x'|x,a)\max_{a'}Q(x', a').
\end{equation*}

In many problems of interest, we do not have a full model of the MDP, and instead use a family of stochastic versions of the Bellman operator called temporal difference (TD) updates \cite{suttonandbarto}. We will focus on the SARSA update \cite{rummery94online}, defined as follows. We fix a policy $\pi$ and let $(x_t, a_t, r_t, x_{t+1}, a_{t+1})$ be a sampled transition from the MDP, where $r_t \sim R(x_t, a_t)$ is a realized reward and $a_{t+1} \sim \pi(\cdot | x_{t+1})$. We let $\alpha_t$ be a step size parameter for time $t$. Then given a value function estimate $Q_t: \X \times \cA \rightarrow \mathbb{R}$ at time $t$, the SARSA update gives the new estimate $Q_{t+1}$:
\begin{equation}\label{td_update}\small
Q_{t+1}(x_t,a_t) = (1 - \alpha_t) Q_{t}(x_t,a_t) + \alpha_t(r_t + \gamma Q_{t}(x_{t+1}, a_{t+1})) .
\end{equation}

Under certain conditions on the MDP and $\alpha_t$, SARSA converges to $Q^\pi$ \cite{td-convergence}.

Semi-gradient SARSA updates extend the SARSA update from the tabular to the function approximation setting. We consider a parameter vector $\theta_t$, and feature vectors $\phi_{x,a}$ for each $(x,a)\in\cX\times\cA$ such that 
\[Q_t(x,a) = \theta_t^T \phi_{x,a}.\]
Given $\theta_t$, $\theta_{t+1}$ is given by the \textit{semigradient} update  \cite{suttonandbarto}
\begin{equation}\label{semigradient_q}
\theta_{t+1} := \theta_t - \alpha_t (\theta_t^T\phi_{x_t, a_t} - r_t + \gamma \theta_t^T\phi_{x_{t+1}, a_{t+1}})\phi_{x_t, a_t} .
\end{equation}

Instead of considering only the expected return from a state-action pair, one can consider the distribution of returns. We will use the notation $Z: \cX \times \cA \rightarrow \textit{Dist}(\mathbb{R})$ to denote a \textit{return distribution function}. We can then construct an analogous Bellman operator for these functions, as shown by \cite{bellemare2017} and termed the distributional Bellman operator, $T^\pi_\cD$:
\begin{equation}\label{rvbellman}
T^\pi_\cD Z(x,a) \overset{D}{=} R(x,a) + \gamma Z(X', A')
\end{equation}
where $X', A'$ are the random variables corresponding to the next state and action. This is equality in distribution, and not an equality of random variables. 

Analogous to the expected Bellman operator, repeated application of the distributional Bellman operator can be shown to converge to the true distribution of returns \cite{bellemare2017}. Later work showed the convergence of stochastic updates in the distributional setting \cite{understanding-rl}. The proof of convergence of the distributional Bellman operator uses a contraction argument, which in the distributional setting requires us to be careful about how we define the distance between two return distribution functions.

Probability divergences and metrics capture this notion of distance. The theoretical properties of some probability distribution metrics have been previously explored by \cite{gibbs02choosing}, and the Wasserstein distance in particular studied further in the context of MDPs \cite{ferns12methods} as well as in the generative model literature \cite{wassersteingan}. The Wasserstein distance also appears in the distributional reinforcement learning literature, but we omit its definition in favour of the related \cramer distance, whose properties make it more amenable to the tools we use in our analysis.

The C51 algorithm uses the cross-entropy loss function to achieve promising performance in Atari games; however, the role of the cross-entropy loss in distributional RL has not been the subject of much theoretical analysis. We will use primarily the \cramer distance \cite{szekely2003} in the results that follow, which has been studied in greater depth in the distributional RL literature. Motivations for the use of this distance have been previously outlined for generative models \cite{cramer}.
\begin{definition}[Cram\'er Distance]
Let $P, Q$ be two probability distributions with Cumulative Distribution Functions (CDFs) $F_P, F_Q$. The Cram\'er metric $\ell_2$ between $P$ and $ Q$ is defined as follows: \[\ell_2(P, Q) = \sqrt{\int_{\mathbb{R}} (F_P(x) - F_Q(x))^2dx}\]
We will overload notation and write equivalently $\ell_2(P, Q) \equiv \ell_2(F_P, F_Q)$ or, when $X$ and $Y$ are random variables with laws $P$ and $Q$, $\ell_2(P,Q)\equiv \ell_2(X, Y) $. 

\end{definition}

Practically, distributional reinforcement learning algorithms require that we approximate distributions. There are many ways one can do this, for example by predicting the quantiles of the return distribution  \cite{quantile_regression}. In our analysis we will focus on the class of categorical distributions with finite support. Given some fixed set $\mathbf{z} = \{z_1, \dots, z_K\} \in \mathbb{R}^K$ with $z_1 \leq z_2 \leq \dots \leq z_K$, a categorical distribution $P$ with support $\mathbf{z}$ is a mixture of Dirac measures on each of the $z_i$'s, having the form

\begin{equation} \label{eqn:mixture_diracs}
    P \in \cZ_z := \left\lbrace \sum_{i=1}^K \alpha_i \delta_{z_i} : \alpha_i \ge 0, \sum_{i=1}^K \alpha_i = 1 \right\rbrace .
\end{equation}

Under this class of distributions, the \cramer distance becomes a finite sum

\[\ell_2(F_P, F_Q) = \sqrt{\sum_{i=1}^{K-1} (z_{i+1} - z_i)(F_P(z_i) - F_Q(z_i))^2} \]

which amounts to a weighted Euclidean norm between the CDFs of the two distributions. When the atoms of the support are equally spaced apart, we get a scalar multiple of the Euclidean distance between the vectors of the CDFs.

We can use the \cramer distance to define a projection onto a fixed categorical support \textbf{z} \cite{understanding-rl}. 
\begin{definition}[Cram\'er Projection]
Let $\mathbf{z}$ be an ordered set of $K$ real numbers. For a Dirac measure $\delta_y$, the Cram\'er projection $\Pi_C(\delta_y)$ onto the support $\mathbf{z}$ is given by:
\[\Pi_C(\delta_y) = \begin{cases}
                \delta_{z_1} & \text{if $y\leq z_1$} \\
                 \frac{z_{i+1} - y}{z_{i+1} - z_i}\delta_{z_i} +\frac{y - z_i}{z_{i+1} - z_i}\delta_{z_{i+1}} 
                  & \text{if $z_i < y \leq z_{i+1}$} \\
  \delta_{z_K} & \text{if $y > z_{K}$}
  \end{cases}
  \]
\end{definition}
The operator $\Pi_C$ has two notable properties: first, as hinted by the name \textit{\cramer projection}, it produces the distribution supported on $\mathbf{z}$ which minimizes the \cramer distance to the original distribution. Second, if the support of the distribution is contained in the interval $[z_1, z_K]$, we can show that the \cramer projection preserves the distribution's expected value \ref{projectionexpectation}. It is thus a natural approximation tool for categorical distributions.

\begin{prop}\label{projectionexpectation}
Let $\mathbf{z} \in \mathbb{R}^k$, and $P$ be a mixture of Dirac distributions (see Eq. \ref{eqn:mixture_diracs}) whose support is contained in the interval $[z_1, z_K]$. Then the \cramer projection $\Pi_C (P)$ onto $\mathbf{z}$ is such that 
\[\mathbb{E}[\Pi_C(P)] = \mathbb{E}[P] \].
\end{prop}

The \cramer projection is implicit in the C51 algorithm, introduced by \cite{bellemare2017}. The C51 algorithm uses a deep neural network to compute a softmax distribution, then updates its weights according to the cross-entropy loss between its prediction and a sampled target distribution, which is then projected onto the support of the predicted distribution using the \cramer projection.

\section{The Coupled Updates Method}
We are interested in the behavioural differences (or lack thereof) between distributional and expected RL algorithms.
We will study these differences through a methodology where we couple the experience used by the update rules of the two algorithms.

Under this methodology, we consider pairs of agents: one that learns a value function (the \emph{expectation learner}) and one that learns a value distribution (the \emph{distribution learner}). The output of the first learner is a sequence of value functions $Q_1, Q_2, \dots$. The output of the second is a sequence of value distributions $Z_1, Z_2, \dots$. More precisely, each sequence is constructed via some update rule:
\begin{equation*}
    Q_{t+1} := U_E(Q_t, \omega_t) \qquad  Z_{t+1} := U_D(Z_t, \omega_t),
\end{equation*}
from initial conditions $Q_0$ and $Z_0$, respectively, and where $\omega_t$ is drawn from some sample space $\Omega$.   These update rules may be deterministic (for example, an application of the Bellman operator) or random (a sample-based update such as TD-learning). Importantly, however, the two updates may be coupled through the common sample $\omega_t$. Intuitively, $\Omega$ can be thought of as the source of randomness in the MDP. 

Key to our analysis is to study update rules that are analogues of one another. If $U_E$ is the Bellman operator, for example, then $U_D$ is the distributional Bellman operator. More generally speaking, we will distinguish between \emph{model-based} update rules, which do not depend on $\omega_t$, and \emph{sample-based} update rules, which do. In the latter case, we will assume access to a scheme that generates sample transitions based on the sequence $\omega_1, \omega_2, \dots$, that is, a generator $G : \omega_1, \dots, \omega_t \mapsto (x_t, a_t, r_t, x_{t+1}, a_{t+1})$. Under this scheme, a pair $U_E, U_D$ of sampled-based update rules receive exactly the same sample transitions (for each possible realization); hence the \emph{coupled updates method}, inspired by the notion of coupling from the probability theory literature \cite{thorisson2000coupling}.

The main question we will answer is: \emph{which analogue update rules preserve expectations?} Specifically, we write

\begin{equation*}
    Z \expeq Q \iff \expect [ Z(x,a) ] = Q(x,a) \quad \forall \; (x, a) \in \cX \times \cA .
\end{equation*}
We will say that analogue rules $U_D$ and $U_E$ are \emph{expectation-equivalent} if, for all sequences $(\omega_t)$, and for all $Z_0$ and $Q_0$,
\begin{equation*}
    Z_0 \expeq Q_0 \iff Z_t \expeq Q_t \quad \forall t \in \bN.
\end{equation*}

Our coupling-inspired methodology will allow us to rule out a number of common hypotheses regarding the good performance of distributional reinforcement learning:

\noindent \textbf{Distributional RL reduces variance.} By our coupling argument, any expectation-equivalent rules $U_D$ and $U_E$ produce exactly the same sequence of expected values, \emph{along each sample trajectory}. The distributions of expectations relative to the random draws $\omega_1, \omega_2, \dots$ are identical and therefore, $\Var [ \expect Z_t(x, a) ] = \Var [ Q_t(x, a) ]$ everywhere, and $U_D$ does not produce lower variance estimates.

\noindent \textbf{Distributional RL helps with policy iteration.} One may imagine that distributional RL helps identify the best action. But if $Z_t \expeq Q_t$ everywhere, then also greedy policies based on $\argmax Q_t(x, \cdot)$ and $\argmax \expect Z_t(x, \cdot)$ agree. Hence our results (presented in the context of policy evaluation) extend to the setting in which actions are selected on the basis of their expectation (e.g. $\epsilon$-greedy, softmax).

\noindent \textbf{Distributional RL is more stable with function approximation.} We will use the coupling methodology to provide evidence that, at least combined with linear function approximation, distributional update rules do not improve performance.

\section{Analysis of Behavioural Differences}

Our coupling-inspired methodology provides us with a stable framework to perform a theoretical investigation of the behavioural differences between distributional and expected RL. We use it through a progression of settings that will gradually increase in complexity to shed light on what causes distributional algorithms to behave differently from expected algorithms. We consider three axes of complexity: 1) how we approximate the state space, 2) how we represent the distribution, and 3) how we perform updates on the predicted distribution function.

\subsection{Tabular models}
We first consider tabular representations, which uniquely represent the predicted return distribution at each state-action pair. We start with the simplest class of updates, that of the expected and distributional Bellman operators. Here and below we will write $\cZ$ for the space of bounded return distribution functions and $\cQ$ for the space of bounded value functions. We begin with results regarding two model-based update rules.
\begin{prop}\label{oplemma}
 Let $Z_0 \in \cZ$ and $Q_0 \in \cQ$, and suppose that $Z_0 \expeq Q_0$. If
 \begin{equation*}
 Z_{t+1}:= T^\pi_\cD Z_t \quad Q_{t+1} := T^\pi Q_t,
 \end{equation*}
then also $Z_t \expeq Q_t \; \forall \; t \in \mathbb{N}$.
\end{prop}
See the supplemental material for the proof of this result and those that follow.

We next consider a categorical, tabular representation where the distribution at each state-action pair is stored explicitly but, as per Equation (\ref{eqn:mixture_diracs}), restricted to the finite support $\mathbf{z} = \{z_1, \dots, z_K\}$, \cite{understanding-rl}. Unlike the tabular representation of Prop. \ref{oplemma}, this algorithm has a practical implementation; however, after each Bellman update we must project back the result into the space of those finite-support distributions, giving rise to a projected operator.

\begin{prop}\label{approxop}
Suppose that the finite support brackets the set of attainable value distributions, in the sense that $z_1 \le -\frac{\Rmax}{1 - \gamma}$ and $z_K \ge \frac{\Rmax}{1 - \gamma}$. Define the projected distributional operator
\begin{equation*}
    T_C^\pi := \Pi_C T^\pi_\cD .
\end{equation*}
Suppose $Z_0\expeq Q_0$, for $Z_0 \in \cZ_z, Q_0 \in \cQ$. If
\begin{equation*}
    Z_{t+1} := T^\pi_C Z_t \quad Q_{t+1} := T^\pi Q_t,
\end{equation*}
then also $Z_t \expeq Q_t \; \forall \; t \in \mathbb{N}$.
\end{prop}
Next, we consider sample-based update rules, still in the tabular setting. These roughly correspond to the Categorical SARSA algorithm whose convergence was established by \cite{understanding-rl}, with or without the projection step. Here we highlight the expectation-equivalence of this algorithm to the classic SARSA algorithm \cite{rummery94online}.

For these results we will need some additional notation. Consider a sample transition $(x_t, a_t, r_t, x_{t+1}, a_{t+1})$. Given a random variable $Y$, denote its probability distribution by $P_Y$ and its cumulative distribution function by $F_Y$, respectively. With some abuse of notation we extend this to value distributions and write $P_Z(x, a)$ and $F_Z(x,a)$ for the probability distribution and cumulative distribution function, respectively, corresponding to $Z(x,a)$. Finally, let $Z_t'(x_t,a_t)$ be a random variable distributed like the target $r_t + \gamma Z_t(x_{t+1}, a_{t+1})$, and write $\Pi_C Z_t'(x,a)$ for its Cram\'er projection onto the support $\mathbf{z}$.
\begin{prop}\label{mixture}
Suppose that $Z_0 \in \cZ, Q_0 \in \cQ$ and $Z_0 \expeq Q_0$. Given a sample transition $(x_t, a_t, r_t, x_{t+1}, a_{t+1})$ consider the mixture update
\begin{equation*}
    P_{Z_{t+1}}(x, a) := \left \{ \begin{array}{ll}
        (1 - \alpha_t) P_{Z_t}(x, a) + \alpha_t P_{Z_t'}(x_t, a_t) & \\
        P_{Z_t}(x, a) & \hspace{-7em} \text{if } x, a \ne x_t, a_t
    \end{array} \right .
\end{equation*}
and the SARSA update
\begin{equation*}
Q_{t+1}(x_t, a_t) := \left \{ \begin{array}{ll}
    Q_t(x_t, a_t) + \alpha_t \delta_t  &\\
    Q_t(x, a) & \text{if } x, a \ne x_t, a_t
\end{array} \right .
\end{equation*}
where $\delta_t := (r_t + \gamma Q_t(x_{t+1}, a_{t+1}) - Q_t(x_t, a_t))$, then also $Z_t \expeq Q_t \; \forall \; t \in \mathbb{N}$.
\end{prop}
\begin{prop}\label{mixture_projection}
Suppose that $Z_0 \in \cZ_z, Q_0 \in \cQ$, $Z_0 \expeq Q_0$, that $\mathbf{z}$ brackets the set of attainable value distributions, and $P_{Z_t'}$ in Prop. \ref{mixture} is replaced by the projected target $P_{\Pi_C Z_t'}$. Then also $Z_t \expeq Q_t \; \forall \; t \in \mathbb{N}$.
\end{prop}
Together, the propositions above show that there is no benefit, at least in terms of modelling expectations, to using distributional RL in a tabular setting when considering the distributional analogues of update rules in common usage in reinforcement learning.

Next we turn our attention to a slightly more complex case, where distributional updates correspond to a semi-gradient update. In the expected setting, the mixture update of Prop. \ref{mixture} corresponds to taking the semi-gradient of the squared loss of the temporal difference error $\delta_t$. While there is no simple notion of semi-gradient when $Z$ is allowed to represent arbitrary distributions in $\cZ$, there is when we consider a categorical representation, which is a finite object when $\cX$ and $\cA$ are finite (specifically, $Z$ can be represented by $|\cX||\cA|K$ real values).

To keep the exposition simple, in what follows we ignore the fact that semi-gradient updates may yield an object which is not a probability distribution proper. In particular, the arguments remain unchanged if we allow the learner to output a signed distribution, as argued by \cite{signed}.
\begin{definition}[Gradient of Cram\'er Distance]
Let $Z, Z'\in\cZ_z$ be two categorical distributions supported on $\mathbf{z}$. We define the gradient of the squared \cramer distance with respect to the CDF of $Z$, denoted $\nabla_F \ell_2^2(Z, Z') \in \mathbb{R}^K$ as follows:
\begin{equation*}
    \nabla_F \ell_2^2(Z, Z')[i] := \frac{\partial}{\partial F(z_i)} \ell_2^2(Z, Z').
\end{equation*}
Similarly,
\begin{equation*}
    \nabla_P \ell_2^2(Z, Z')[i] := \frac{\partial}{\partial P(z_i)} \ell_2^2(Z, Z').
\end{equation*}
\end{definition}
We say that the categorical support $\mathbf{z}$ is \emph{$c$-spaced} if $z_{i+1} - z_i = c$ for all $i$ (recall that $z_{i+1} \ge z_i$).
\begin{prop}\label{gradient_cdf}
Suppose that the categorical support $\mathbf{z}$ is $c$-spaced. Let $Z_0 \in \cZ, Q_0 \in \cQ$ be such that $Z_0\expeq Q_0$.
Suppose that $Q_{t+1}$ is updated according to the SARSA update with step-size $\alpha_t$. Let $Z_t'$ be given by $\Pi_C (r_t + \gamma Z_t(x_{t+1}, a_{t+1}))$. Consider the CDF gradient update rule
\begin{align*}
\small
    F_{Z_{t+1}}(x, a) := \left \{ \begin{array}{ll} \small
        F_{Z_t}(x, a) + \alpha_t' \nabla_F \ell_2^2(Z_t(x_t, a_t), Z_t'(x_t, a_t)) & \\
        F_{Z_t}(x, a) & \hspace{-8em} \text{if } x, a \ne x_t, a_t .
    \end{array} \right .
\end{align*}
If $\alpha_t' = \tfrac{\alpha_t}{2c}$, then also $Z_t \expeq Q_t \; \forall \; t \in \mathbb{N}$.
\end{prop}
Prop. \ref{gradient_cdf} shows that there exists a semi-gradient update which is expectation-equivalent to the SARSA update, with only a change of step-size required. While this is not too surprising (\cite{understanding-rl} remarked on the relationship between the mixture update of Prop. \ref{mixture} and the CDF semi-gradient), the result does highlight that the equivalence continues to hold even in gradient-based settings.
The resemblance stops here, however, and we now come to our first negative result.
\begin{prop}\label{mixture_pdf}
Suppose the CDF gradient in update rule of Prop. \ref{gradient_cdf} is replaced by the PDF gradient $\nabla_P \ell_2^2 (Z_t, Z_t')$. Then for each choice of step-size $\alpha'$ there exists an MDP $M$ and a time step $t \in \bN$ for which $Z_0 \expeq Q_0$ but $Z_t \nexpeq Q_t$.
\end{prop}
The counterexample used in the proof of Prop. \ref{mixture_pdf} illustrates what happens when the gradient is taken w.r.t. the probability mass: some atoms of the distribution may be assigned negative probabilities. Including a projection step does not rectify the issue, as the expectation of $Z_t$ remains different from $Q_t$.

\subsection{Linear Function Approximation}

In the linear approximation setting, we represent each state-action pair $x,a$ as a \textit{feature vector} $\phi_{x,a} \in \mathbb{R}^d$, 
We wish to find a linear function given by a weight vector $\theta$ such that 
\begin{equation*}
    \theta^T \phi_{x,a} \approx Q^\pi(x,a) .
\end{equation*}
In the categorical distributional setting, $\theta$ becomes a matrix $W \in \bR^{K \times d}$. Here we will consider approximating the cumulative distribution function:
\begin{equation*}
    F_{Z(x,a)}(z_i) = W \phi_{x,a}[i].
\end{equation*}
We can extend this parametric form by viewing $F$ as describing the CDF of a mixture of Diracs (Equation \ref{eqn:mixture_diracs}). Thus, $F(z) = 0$ for $z < z_1$, and similarly $F(z) = F(z_k)$ for $z \ge z_k$; see \cite{signed} for a justification.
In what follows we further assume that the support $\mathbf{z}$ is $1$-spaced.

In this setting, there may be no $W$ for which $F_Z(x,a)$ describes a proper cumulative distribution function: e.g. $F(y)$ may be less than or greater than 1 for $y > z_k$. Yet, as shown by \cite{signed}, we can still analyze the behaviour of a distributional algorithm which is allowed to output improper distributions. In our analysis we will assume that all predicted distributions sum to 1, though they may assign negative mass to some outcomes.

We write $\cQ_\phi := \{ \theta^\top \phi : \theta \in \bR^d \}$ for the set of value functions that can be represented by a linear approximation over $\phi$. Similarly, $\cZ_\phi := \{ W \phi : W \in \bR^{K \times d} \}$ is the set of CDFs that are linearly representable. For $Z_t \in \cZ_\phi$, let $W_t$ be the corresponding weight matrix. As before, we define $Z_t'(x_t, a_t)$ to be the random variable corresponding to the projected target $\Pi_C [ r_t + \gamma Z_t(x_{t+1}, a_{t+1})]$.

\begin{prop}\label{linear}
Let $Z_0 \in \cZ_\phi$, $Q_0 \in \cQ_\phi$, and $\mathbf{z} \in \mathbb{R}^K$ such that \textbf{z} is 1-spaced. Suppose that $Z_0 \expeq Q_0$, and that $Z_0(x,a)[z_K] = 1$ $\forall x,a$. Let $W_t, \theta_t$ respectively denote the weights corresponding to $Z_t$ and $Q_t$. If $Z_{t+1}$ is computed from the semi-gradient update rule
\begin{equation*}
    W_{t+1} := W_t + \alpha_t (F_{Z_t'}(x_t, a_t) - W_t\phi_{x_t, a_t} ) \phi_{x_t, a_t}^T
\end{equation*}
and $Q_{t+1}$ is computed according to Equation \ref{semigradient_q} with the same step-size $\alpha_t$, then also $Z_t \expeq Q_t \; \forall \; t \in \mathbb{N}$.
\end{prop}
Importantly, the gradient in the previous proposition is taken with respect to the CDF of the distribution. Taking the gradient with respect to the Probability Mass Function (PMF) does not preserve the expected value of the predictions, which we have already shown in the tabular setting. This negative result is consistent with the results on signed distributions by \cite{signed}.

\begin{figure*}[!htb]
\centering
\begin{subfigure}
  \centering
  \includegraphics[width=.4\textwidth]{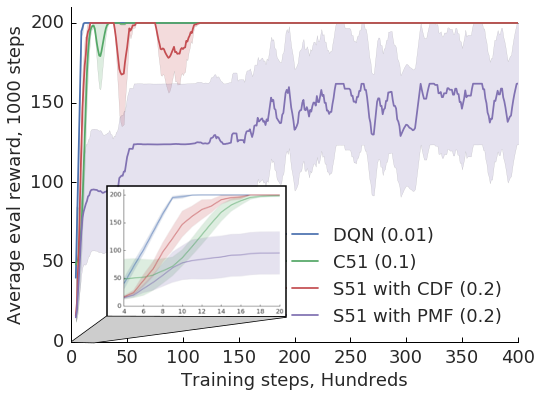}
\end{subfigure}
\begin{subfigure}
  \centering
  \includegraphics[width=.4\textwidth]{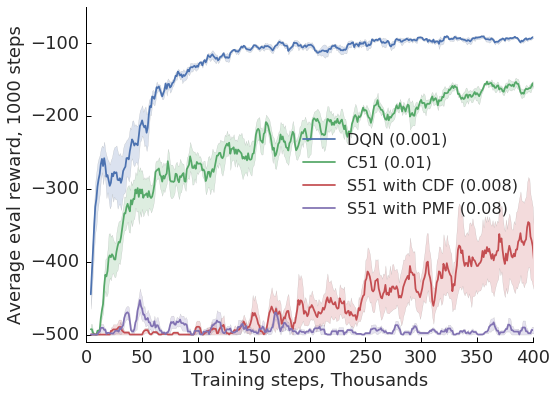}
\end{subfigure}
\caption{Cartpole (left) and Acrobot (right) with Fourier features of order 4. Step size in parentheses. Algorithms correspond to the lite versions described in text.}
\label{fig:fourier}
\end{figure*}

\begin{figure*}[!htb]
\centering
\begin{subfigure}
  \centering
  \includegraphics[width=.3\textwidth]{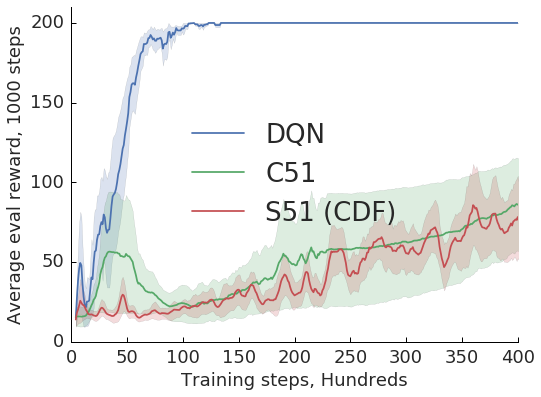}
\end{subfigure}
\begin{subfigure}
  \centering
  \includegraphics[width=.3\textwidth]{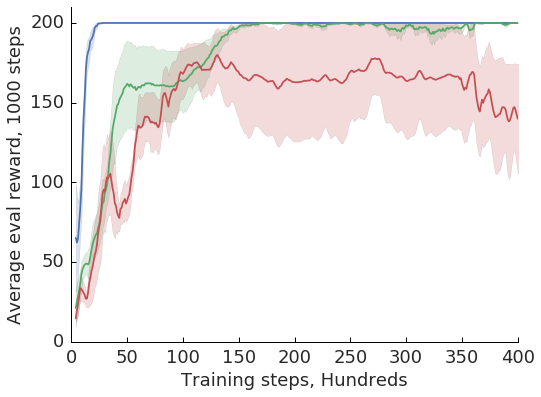}
\end{subfigure}
\begin{subfigure}
  \centering
  \includegraphics[width=.3\textwidth]{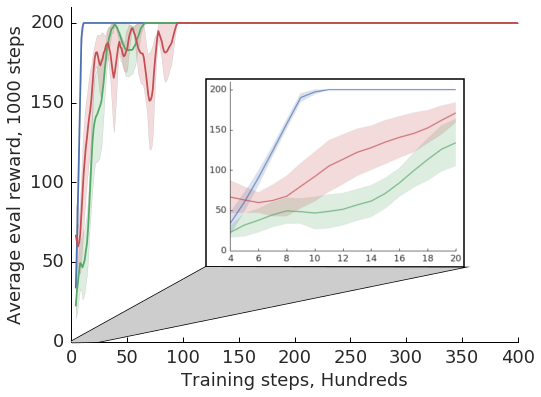}
\end{subfigure}
\caption{Varying the basis orders on CartPole. Orders 1 (left), 2 (center), 3 (right). Step sizes same as in Fig.\ref{fig:fourier}.}
\label{fig:varying_basis}
\end{figure*}

\subsection{Non-linear Function Approximation}
To conclude this theoretical analysis, we consider more general function approximation schemes, which we will refer to as the non-linear setting. In the non-linear setting, we consider a differentiable function $\psi(W, \cdot)$. For example, the function $\psi(W, \phi_{x,a})$ could be a probability mass function given by a softmax distribution over the logits $W\phi_{x,a}$, as is the case for the final layer of the neural network in the C51 algorithm.

\begin{prop}\label{nonlinear-lemma}
There exists a (nonlinear) representation of the cumulative distribution function parametrized by $W \in \bR^{K\times d}$ such that $Z_0 \expeq Q_0$ but after applying the semi-gradient update rule
\begin{equation*}
    W_{t+1} := W_t + \alpha_t  \grad_W \ell_2^2( \psi(W, \phi(x_t, a_t)), F_{Z_t'}),
\end{equation*}
where $F_{Z_t'}$ is the cumulative distribution function of the projected Bellman target,
we have $Z_1 \nexpeq Q_1$.
\end{prop}
The key difference with Prop. \ref{linear} is the change from the gradient of a linear function (the feature vector $\phi_{x_t, a_t}$) to the gradient of a nonlinear function; hence the result is not as trivial as it might look. Still, while the result is not altogether surprising, combined with our results in the linear case it shows that the interplay with nonlinear function approximation is a viable candidate for explaining the benefits of the distributional approach. In the next section we will present empirical evidence to this effect.

\section{Empirical Analysis}
Our theoretical results demonstrating that distributional RL often performs identically to expected RL contrast with the empirical results of \cite{bellemare2017,quantile_regression,iqn,barthmaron18distributional,hessel18rainbow}, to name a few. In this section we confirm our theoretical findings by providing empirical evidence that distributional reinforcement learning does not improve performance when combined with tabular representations or linear function approximation. Then, we find evidence of improved performance when combined with deep networks, suggesting that the answer lies, as suggested by Prop. \ref{nonlinear-lemma}, in distributional reinforcement learning's interaction with nonlinear function approximation.

\subsection{Tabular models}
Though theoretical results indicate that performing gradient updates with respect to the distribution's CDF should produce different predicted distributions from gradient updates with respect to its PMF, it is not immediately clear how these differences affect performance. To explore this, we considered a 12x12 gridworld environment and ran two distributional versions of Q-learning, one which performed gradient updates with respect to the CDF and one which performed updates with respect to the PMF of the predicted distribution, alongside traditional Q-learning. We found that, as predicted by Proposition 4, when given the same random seed the CDF update method had identical performance to traditional Q-learning. The PMF update method, though not significantly worse in performance, did exhibit \textit{different} performance, performing better on some random seeds and worse on others than expected Q-learning. We observed a similar phenomenon with a simple 3-state chain MDP. Results for both experiments are omitted for brevity, but are included in the supplemental material.

\begin{figure*}[!htb]
\centering
\begin{subfigure}
  \centering
  \includegraphics[width=.4\textwidth]{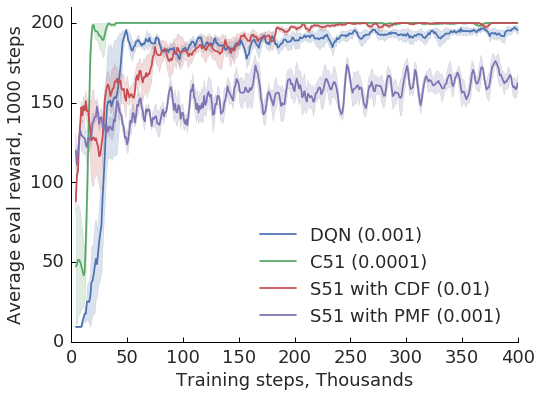}
\end{subfigure}
\begin{subfigure}
  \centering
  \includegraphics[width=.4\textwidth]{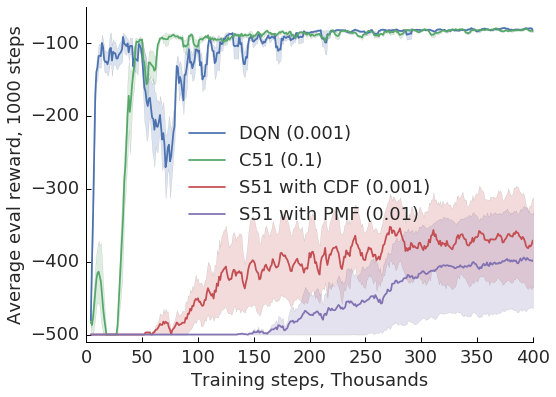}
\end{subfigure}
\caption{Cartpole with Adam optimizer (left) and Acrobot (right) with deep networks. Learning rate parameter in parentheses.}
\label{fig:deep}
\end{figure*}

\subsection{Linear Function Approximation}
We next investigate whether the improved performance of distributional RL is due to the outputs and loss functions used, and whether this can be observed even with linear function approximation.


We make use of three variants of established algorithms for our investigation, modified to use linear function approximators rather than deep networks. We include in this analysis an algorithm that computes a softmax over logits that are a linear function of a state feature vector. Although we did not include this type of approximator in the linear setting of our theoretical analysis, we include it here as it provides an analogue of C51 against which we can compare the other algorithms.
\begin{enumerate}
    \item \textbf{DQN-lite}, based on \cite{mnih2015humanlevel}, predicts $Q(x,a)$, the loss is the squared difference between the target and the prediction.
    \item \textbf{C51-lite}, based on \cite{bellemare2017}, outputs $Z(x,a)$, a softmax distribution whose logits are linear in $\phi_{x,a}$. It minimizes the cross-entropy loss between the target and the prediction.
    \item \textbf{S51-lite}, based on \cite{signed}, outputs $Z(x,a)$ as a categorical distribution whose probabilities are a linear function of $\phi_{x,a}$ and minimizes the squared \cramer distance.
\end{enumerate}

We further break S51-lite down into two sub-methods. One of these performs updates by taking the gradient of the \cramer distance with respect to the points of the PMF of the prediction, while the other takes the gradient with respect to the CDF. For a fair comparison, all algorithms used a stochastic gradient descent optimizer except where noted. We used the same hyperparameters for all algorithms, except for step sizes, where we chose the step size that gave the best performance for each algorithm. We otherwise use the usual agent infrastructure from DQN, including a replay memory of capacity 50,000 and a target network which is updated after every 10 training steps. We update the agent by sampling batches of 128 transitions from the replay memory.

We ran these algorithms on two classic control environments: CartPole and Acrobot. In CartPole, the objective is to keep a pole balanced on a cart by moving the cart left and right. In Acrobot, the objective is to swing a double-linked pendulum above a threshold by applying torque to one of its joints. We encode each original state $x$, ($x \in \mathbb{R}^4$ for CartPole and $x \in \mathbb{R}^6$ for Acrobot) as a feature vector $\phi(x)$ given by the Fourier basis for some fixed order \cite{konidaris11value}. For completeness, on Cartpole the basis of order 1 yields 15 features, order 2: 80 features, 3: 255, and finally 4: 624 features.

We first compare how DQN-, C51-, and S51-lite perform on the two tasks in Figure \ref{fig:fourier} with the order 4 basis, which is more than sufficient to well-approximate the value function. We observe that DQN learns more quickly than C51 and S51 with CDF, while S51 with PMF underperforms significantly. On Acrobot, the difference is even more pronounced. This result at first glance seems to contradict the theoretical result we showed indicating that S51-lite should perform similarly to DQN in the linear function approximation case, but we attribute this difference in performance to the fact that the initialization in the S51 algorithm doesn't enforce the assumption that the predicted distributions sum to 1.

We then investigate the effect of reducing the order in all algorithms in Figure \ref{fig:varying_basis}. We observe that the distributional algorithms performs poorly when there are too few features; by contrast, DQN-lite can perform both tasks with an order 2 basis. This indicates that the distributional methods suffered more when there were fewer informative features available than expectation-based methods did in this setting.

\subsection{Nonlinear function approximation}
We repeat the above experiment, but now replace the Fourier basis features with a two-layer ReLU neural network that is trained along with the final layer (which remains linear). In the CartPole task we found that DQN often diverged with the gradient descent optimizer, so we used Adam for all the algorithms, and chose the learning rate parameter that gave the best performance for each. Results are displayed in Figure \ref{fig:deep}. We can observe that C51 generally outperforms DQN, although they both eventually converge to the same value. It is interesting to notice that S51 has a harder time achieving the same performance, and comes nowhere near in the harder Acrobot task. This suggests that despite being theoretically unnecessary, the softmax in C51 is working to its advantage. This finding is consistent with the empirical results observed in Atari games by \cite{signed}.

The results of the previous two sets of experiments indicate that the benefits of distributional reinforcement learning are primarily gained from improved learning in the earlier layers of deep neural networks, as well as in the nonlinear softmax used in C51. We believe further investigations in this direction should lead to a deeper understanding of the distributional approach.

\section{Discussion and Future Work}

In this paper we have provided theoretical and empirical results that give evidence on the benefits (or, some cases, lack thereof) of the distributional approach in reinforcement learning. Together, our results point to function approximation as the key driver in the difference in behaviour between distributional and expected algorithms.



To summarize, our findings are:
\begin{enumerate}
    \item Distributional methods are generally expectation-equivalent when using tabular representations or linear function approximation, but
    \item diverge from expected methods when we use non-linear function approximation. 
    \item Empirically, we provide fresh confirmation that modelling a distribution helps when using non-linear approximation.
    \end{enumerate}

There are a few notions from distributional reinforcement learning not covered by our study here, including the effect of using Wasserstein projections of the kind implied by quantile regression \cite{quantile_regression}, and the impact of the softmax transfer function used by C51 on learning. 
In particular the regression setting, \cite{histogramloss} show that even for a fixed set of features, optimizing a distributional loss results in better accuracy than minimizing the squared error of predictions.

Yet we believe the most important question raised by our work is: what happens in deep neural networks that benefits most from the distributional perspective? Returning to the proposed reasons for the distributional perspective's success in the introduction, we note that the potentially regularizing effect of modeling a distribution, and a potential role as an `auxiliary task' played by the distribution are both avenues that remain largely unaddressed by this work.

\bibliographystyle{aaai}
\bibliography{arxiv}
\clearpage
\section{Proofs of main results}
\setcounter{prop}{0}
\begin{prop}\label{projectionexpectation}
Let $\mathbf{z} \in \mathbb{R}^k$, and $P$ be a mixture of Dirac distributions (see Eq. \ref{eqn:mixture_diracs}) whose support is contained in the interval $[z_1, z_K]$. Then the \cramer projection $\Pi_C (P)$ onto $\mathbf{z}$ is such that 
\[\mathbb{E}[\Pi_C(P)] = \mathbb{E}[P] \].
\end{prop}
\begin{proof}
We first prove the statement for a single dirac $\delta_y$. We let $z_i, z_{i+1}$ be such that $z_i \leq y \leq z_{i+1}$.
\begin{align*}
    \mathbb{E}[\Pi_C \delta_y] &= \mathbb{E}[\frac{z_{i+1} - y}{z_{i+1} - z_i}\delta_{z_i} + \frac{y - z_i}{z_{i+1} - z_i}\delta_{z_{i+1}}]\\
    &= \frac{z_{i+1} - y}{z_{i+1} - z_i}\mathbb{E}[\delta_{z_i}] +  \frac{y - z_i}{z_{i+1} - z_i}\mathbb{E}[\delta_{z_{i+1}}]\\
    &= \frac{1}{z_{i+1} - z_i}(z_{i+1} - z_i)y \\
    &= y
\end{align*}
If the law of the random variable $Z$ is a mixture of diracs, i.e. $P = \sum_{i=1}^n \alpha_i\delta_{y_i}$, then we have 
\begin{align*}
    \mathbb{E}[\Pi_C Z] &= \mathbb{E}[\Pi_C \sum_{i=1}^n \alpha_i \delta_{y_i} ]\\
    &=\sum_{i=1}^n \alpha_i \mathbb{E}[\Pi_C \delta_{y_i}] \\
    &= \sum_{i=1}^n \alpha_i \mathbb{E}[ \delta_{y_i}]\\
    &= \mathbb{E}[Z]
\end{align*}
\end{proof}
\begin{prop}\label{oplemma}
 Let $Z_0 \in \cZ$ and $Q_0 \in \cQ$, and suppose that $Z_0 \expeq Q_0$. If
 \begin{equation*}
 Z_{t+1}:= T^\pi_\cD Z_t \quad Q_{t+1} := T^\pi Q_t,
 \end{equation*}
then also $Z_t \expeq Q_t \; \forall \; t \in \mathbb{N}$.
\end{prop}

\begin{proof}
By induction. By construction this is the case for $Z_0, Q_0$. Suppose it holds for timestep $t$. Then for timestep $t+1$, we have:
\begin{align*}
    \mathbb{E}&[ Z_{t+1}(x,a)] = \mathbb{E}[R(x,a) + Z_t(X', A')] \\
    &= \mathbb{E}[R(x,a)] +\gamma \sum_{x', a'}P(x'|x,a) \pi(a'|x') \mathbb{E}[Z_t(x',a')] \\
    &= \mathbb{E}[R(x,a)] + \gamma \sum_{x', a'} P(x'|x,a) \pi(a'|x')  Q_t(x',a') \\
    &= Q_{t+1}(x,a) \qedhere
\end{align*}

\end{proof}

\begin{prop}\label{approxop}
Suppose that the finite support brackets the set of attainable value distributions, in the sense that $z_1 \le -\frac{\Rmax}{1 - \gamma}$ and $z_K \ge \frac{\Rmax}{1 - \gamma}$. Define the projected distributional operator
\begin{equation*}
    T_C^\pi := \Pi_C T^\pi_\cD .
\end{equation*}
Suppose $Z_0\expeq Q_0$, for $Z_0 \in \cZ_z, Q_0 \in \cQ$. If
\begin{equation*}
    Z_{t+1} := T^\pi_C Z_t \quad Q_{t+1} := T^\pi Q_t,
\end{equation*}
then also $Z_t \expeq Q_t \; \forall \; t \in \mathbb{N}$.
\end{prop}
\begin{proof}
Again, we proceed by induction and observe that the equality is true by assumption for $t=0$. We use the result from Proposition \ref{projectionexpectation}. Then since $\mathbb{E}[T^\pi Z_t(x,a)] = T^\pi Q_t(x,a)$ we have that 
\begin{align*}
    \mathbb{E}[T^\pi_C Z_t(x,a)]& = \mathbb{E}[T^\pi Z_t(x,a)]\\
    &= T^\pi Q_t(x,a)
\end{align*} which proves the proposition.
\end{proof}

\begin{corollary}
The same proof can be used to show that the optimality operator $T^*$ induces equivalent behaviour in distributional and expected algorithms.
\end{corollary}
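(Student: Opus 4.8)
\emph{Proof proposal.} The plan is to rerun the inductive arguments of Propositions~\ref{oplemma} and~\ref{approxop} almost verbatim, the only new ingredient being that the greedy action set associated with a value distribution depends on that distribution only through its expectation. Recall that the distributional optimality operator acts by $T^*_\cD Z(x,a) \expeq R(x,a) + \gamma Z(X', A^*)$, where in each state $x'$ the action $A^* = a^*(x')$ is greedy with respect to $a' \mapsto \mathbb{E}[Z(x',a')]$, whereas the expected optimality operator is $T^* Q(x,a) = \mathbb{E}[R(x,a)] + \gamma \sum_{x'} P(x'|x,a) \max_{a'} Q(x',a')$. I would fix once and for all a deterministic tie-breaking rule on actions and use it for both operators.

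First I would set up the induction. The base case $Z_0 \expeq Q_0$ holds by hypothesis. Assuming $Z_t \expeq Q_t$, i.e.\ $\mathbb{E}[Z_t(x',a')] = Q_t(x',a')$ for every $(x',a')$, the map $a' \mapsto \mathbb{E}[Z_t(x',a')]$ coincides with $a' \mapsto Q_t(x',a')$; in particular the two are maximized over the same set of actions, so with the common tie-breaking rule each operator selects the same greedy action $a^*(x')$ in state $x'$. The computation of the updated expectation is then exactly as in Proposition~\ref{oplemma}:
\begin{align*}
\mathbb{E}[T^*_\cD Z_t(x,a)] &= \mathbb{E}[R(x,a)] + \gamma \sum_{x'} P(x'|x,a)\, \mathbb{E}[Z_t(x', a^*(x'))] \\
&= \mathbb{E}[R(x,a)] + \gamma \sum_{x'} P(x'|x,a)\, Q_t(x', a^*(x')) \\
&= \mathbb{E}[R(x,a)] + \gamma \sum_{x'} P(x'|x,a)\, \max_{a'} Q_t(x',a') \\
&= T^* Q_t(x,a),
\end{align*}
which closes the induction for the unprojected operators. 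For the projected optimality operator $T^*_C := \Pi_C T^*_\cD$ I would add one line before the display, as in Proposition~\ref{approxop}: the bracketing hypothesis guarantees that $T^*_\cD Z_t(x,a)$ is supported in $[z_1,z_K]$, so Proposition~\ref{projectionexpectation} gives $\mathbb{E}[T^*_C Z_t(x,a)] = \mathbb{E}[T^*_\cD Z_t(x,a)]$, after which the same chain of equalities applies.

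The only genuine subtlety --- the ``hard part,'' such as it is --- is the non-linearity of $\max$: linearity of expectation by itself does not let one push $\mathbb{E}$ through $T^*_\cD$. This is resolved precisely by the inductive hypothesis, which forces the greedy policies of the distributional and expected iterates to agree, so that at each step the optimality operators collapse to the same fixed-policy evaluation step and the argument of Proposition~\ref{oplemma} goes through unchanged. It is worth remarking that the particular tie-breaking rule is immaterial: any common choice yields the same value in the displayed computation, since all maximizing actions share the same $Q_t$-value.
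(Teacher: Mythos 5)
Your proposal is correct and takes exactly the route the paper intends: the corollary is stated without proof, and your argument is the natural elaboration of ``the same induction as Propositions~\ref{oplemma} and~\ref{approxop}.'' The one point where you go beyond the paper --- observing that the $\max$ is nonlinear and that the inductive hypothesis $Z_t \expeq Q_t$ forces the distributional and expected greedy actions to coincide (under a common tie-breaking rule), collapsing $T^*$ to a fixed-policy evaluation step --- is precisely the subtlety the paper glosses over, and you handle it correctly.
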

\begin{prop}\label{mixture}
Suppose that $Z_0 \in \cZ, Q_0 \in \cQ$ and $Z_0 \expeq Q_0$. Given a sample transition $(x_t, a_t, r_t, x_{t+1}, a_{t+1})$ consider the mixture update on the law of $Z_t$, denoted $P_{Z_t}$
\begin{equation*}
    P_{Z_{t+1}}(x, a) := \left \{ \begin{array}{ll}
        (1 - \alpha_t) P_{Z_t}(x, a) + \alpha_t P_{Z_t'}(x_t, a_t) & \\
        P_{Z_t}(x, a) & \hspace{-7em} \text{if } x, a \ne x_t, a_t
    \end{array} \right .
\end{equation*}
and the SARSA update
\begin{equation*}
Q_{t+1}(x_t, a_t) := \left \{ \begin{array}{ll}
    Q_t(x_t, a_t) + \alpha_t \delta_t  &\\
    Q_t(x, a) & \text{if } x, a \ne x_t, a_t
\end{array} \right .
\end{equation*}
where $\delta_t := (r_t + \gamma Q_t(x_{t+1}, a_{t+1}) - Q_t(x_t, a_t))$, then also $Z_t \expeq Q_t \; \forall \; t \in \mathbb{N}$.
\end{prop}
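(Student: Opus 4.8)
The plan is to argue by induction on $t$, in the same style as the proofs of Propositions \ref{oplemma} and \ref{approxop}; the one new feature is that the distributional update is a convex mixture of laws rather than an application of a Bellman operator, so the work is entirely in checking that $\expeq$ is preserved by such a mixture. The base case $t=0$ is the hypothesis $Z_0 \expeq Q_0$. For the inductive step I assume $\mathbb{E}[Z_t(x,a)] = Q_t(x,a)$ for every $(x,a)$ and split according to the two branches of the piecewise definitions.

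For $(x,a) \ne (x_t,a_t)$ both updates are the identity, so $\mathbb{E}[Z_{t+1}(x,a)] = \mathbb{E}[Z_t(x,a)] = Q_t(x,a) = Q_{t+1}(x,a)$ and there is nothing to do. For the updated pair $(x_t,a_t)$, write $Z_t'$ for the sample Bellman target at $(x_t,a_t)$, i.e. a random variable whose law is $P_{Z_t'}(x_t,a_t)$, built from $r_t + \gamma Z_t(x_{t+1},a_{t+1})$ (and possibly post-composed with the projection $\Pi_C$, which by Proposition \ref{projectionexpectation} leaves the mean unchanged). Using linearity of expectation over the mixture and then over the target,
\begin{align*}
    \mathbb{E}[Z_{t+1}(x_t,a_t)] &= (1-\alpha_t)\,\mathbb{E}[Z_t(x_t,a_t)] + \alpha_t\,\mathbb{E}[Z_t'(x_t,a_t)] \\
    &= (1-\alpha_t)\,Q_t(x_t,a_t) + \alpha_t\bigl(r_t + \gamma\,\mathbb{E}[Z_t(x_{t+1},a_{t+1})]\bigr) \\
    &= (1-\alpha_t)\,Q_t(x_t,a_t) + \alpha_t\bigl(r_t + \gamma\,Q_t(x_{t+1},a_{t+1})\bigr),
\end{align*}
where the last line is the inductive hypothesis at $(x_{t+1},a_{t+1})$. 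Rewriting this as $Q_t(x_t,a_t) + \alpha_t\bigl(r_t + \gamma Q_t(x_{t+1},a_{t+1}) - Q_t(x_t,a_t)\bigr) = Q_t(x_t,a_t) + \alpha_t \delta_t = Q_{t+1}(x_t,a_t)$ closes the induction.

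The step that needs the most care is not a computation but the bookkeeping around $Z_t'$: one must make sure the target random variable is interpreted with the sampled transition $(x_t,a_t,r_t,x_{t+1},a_{t+1})$ held fixed, so that $r_t$ enters as a deterministic constant while the bootstrap term contributes $\gamma\,\mathbb{E}[Z_t(x_{t+1},a_{t+1})]$; one should also invoke Proposition \ref{projectionexpectation} if the update projects the target onto the support $\mathbf{z}$, to see that this does not perturb the expectation. A possibly data-dependent step size $\alpha_t$ is harmless, since the whole argument is carried out conditionally on the transition and on $\alpha_t$. Once these points are settled, the claim reduces to linearity of expectation for convex mixtures together with the inductive hypothesis.
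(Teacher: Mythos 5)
Your proof is correct and follows essentially the same route as the paper's: induction on $t$, linearity of expectation over the convex mixture at $(x_t,a_t)$, and the inductive hypothesis applied to the bootstrap term to recover the SARSA update. The only cosmetic difference is that the paper writes the expectation out as an explicit sum over the finite support rather than invoking linearity abstractly, and your aside about $\Pi_C$ belongs to Proposition \ref{mixture_projection} rather than this one.
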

\begin{proof}
We proceed again by induction. We let $Z_t(x,a)$ be the return distribution at time $t$. By assumption, $\mathbb{E}[Z_0(x,a)] = Q_0(x,a)$ for all $x,a$. We suppose that each target and predicted distribution has finite support, and that the union of the supports for $Z_t$ and $Z_t'$ has size $k_t$. Now, for the induction step:
\begin{align*}
    \mathbb{E}&(Z_{t+1}(x_t,a_t)) = \sum_{i=1}^{k_t} P_{Z_{t+1}} (z_i) z_i \\
    &= \sum_{i=1}^{k_t} (1-\alpha_t) P_{Z_t}(z_i)z_i + \alpha_t P_{Z_t'}(z_i) z_i\\
    &= (1-\alpha_t) \sum_{i=1}^{k_t} P_{Z_t}(z_i)z_i + \alpha_t\sum_{i=1}^{k_t} P_{Z_t'}(z_i) z_i\\
    &= (1-\alpha_t)\mathbb{E}[Z_t(x_t, a_t)] + \alpha_t \mathbb{E}[r_{t+1} +\gamma Z_t(x_{t+1}, a_{t+1})]\\
    &= (1-\alpha_t) Q_t(x_t, a_t) + \alpha_t[r_t + \gamma Q_t(x_{t+1}, a_{t+1})] \\
    &= Q_{t+1}(x_t, a_t) \qedhere
\end{align*}
\end{proof}
\begin{prop}\label{mixture_projection}
Suppose that $Z_0 \in \cZ_z, Q_0 \in \cQ$, $Z_0 \expeq Q_0$, that $\mathbf{z}$ brackets the set of attainable value distributions, and $P_{Z_t'}$ in Prop. \ref{mixture} is replaced by the projected target $P_{\Pi_C Z_t'}$. Then also $Z_t \expeq Q_t \; \forall \; t \in \mathbb{N}$.
\end{prop}
\begin{proof}
Follows from propositions 1 and 4.
\end{proof}
\begin{prop}\label{gradient_cdf}
Suppose that the categorical support $\mathbf{z}$ is $c$-spaced. Let $Z_0 \in \cZ, Q_0 \in \cQ$ be such that $Z_0\expeq Q_0$.
Suppose that $Q_{t+1}$ is updated according to the SARSA update with step-size $\alpha_t$. Let $Z_t'$ be given by $\Pi_C (r_t + \gamma Z_t(x_{t+1}, a_{t+1}))$. Consider the CDF gradient update rule
\begin{align*}
\small
    F_{Z_{t+1}}(x, a) := \left \{ \begin{array}{ll} \small
        F_{Z_t}(x, a) + \alpha_t' \nabla_F \ell_2^2(Z_t(x_t, a_t), Z_t'(x_t, a_t)) & \\
        F_{Z_t}(x, a) & \hspace{-8em} \text{if } x, a \ne x_t, a_t .
    \end{array} \right .
\end{align*}
If $\alpha_t' = \tfrac{\alpha_t}{2c}$, then also $Z_t \expeq Q_t \; \forall \; t \in \mathbb{N}$.
\end{prop}
\begin{proof}
We first note that $\nabla_{F} \ell_2^2(F, F') = 2c(F' - F)$. 
\begin{align*}
    \nabla_F \ell_2^2(F, F')[i] &= \frac{\partial}{\partial F_i} \sum_{j=1}^{K-1}c(F'(z_j) - F(z_j))^2 \\
    &= \frac{\partial}{\partial F_i} c (F'(z_i) - F(z_i))^2 \\
    &= 2c(F'(z_i) - F(z_i))
\end{align*}
Thus the gradient update in this case is simply a mixture update with a different step size, and the result follows from Proposition 3.
\end{proof}
\begin{prop}\label{mixture_pdf}
Suppose the CDF gradient in update rule of Prop. \ref{gradient_cdf} is replaced by the PDF gradient $\nabla_P (Z_t, Z_t')$. Then for each choice of step-size $\alpha'$ there exists an MDP $M$ and a time step $t \in \bN$ for which $Z_0 \expeq Q_0$ but $Z_t \nexpeq Q_t$.
\end{prop}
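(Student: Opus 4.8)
The plan is to write the PDF-gradient update in closed form, track exactly what it does to the mean of the predicted distribution, observe that this change is \emph{not} the one that keeps the mean in step with the SARSA iterate, and then realise the resulting discrepancy in a small explicit MDP.

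First I would compute $\nabla_P \ell_2^2$ explicitly. Writing the loss as $\ell_2^2(Z,Z') = \sum_{j=1}^{K-1} c\,(F_{Z'}(z_j) - F_Z(z_j))^2$ and using $F_Z(z_j) = \sum_{i\le j} p_i$, the chain rule gives $\nabla_{p_i}\ell_2^2 = 2c\sum_{j=i}^{K-1}\big(F_Z(z_j) - F_{Z'}(z_j)\big)$, so the descent step on the masses is $p_i \mapsto p_i + 2c\,\alpha_t'\sum_{j=i}^{K-1}\big(F_{Z'}(z_j) - F_Z(z_j)\big)$ (with $p_K$ fixed by normalisation; which normalisation convention one adopts does not affect the argument). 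Crucially, unlike the CDF-gradient step of Prop.~\ref{gradient_cdf}, this is \emph{not} a convex mixture of $Z$ and $Z'$. Forming $\sum_i z_i p_i$ after the update and rearranging by summation by parts, the induced change in the mean is $2c\,\alpha_t'\sum_{j=1}^{K-1}\big(F_{Z'}(z_j) - F_Z(z_j)\big)\,w_j$, where the weights $w_j$ are built from partial sums of the support points and are therefore \emph{non-constant} in $j$ as soon as $K\ge 3$.

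Next I would identify the change that equivalence actually demands. By the induction hypothesis and Prop.~\ref{projectionexpectation} (the target sitting on the support), $\mathbb{E}[Z_t'(x_t,a_t)] = r_t + \gamma Q_t(x_{t+1},a_{t+1})$, so the SARSA update forces $\mathbb{E}[Z_{t+1}(x_t,a_t)] = \mathbb{E}[Z_t(x_t,a_t)] + \alpha_t\big(\mathbb{E}[Z_t'] - \mathbb{E}[Z_t]\big)$, and by the identity $\mathbb{E}[P] = z_K - c\sum_{j=1}^{K-1}F_P(z_j)$ the bracketed difference equals $c\sum_{j}\big(F_{Z_t}(z_j) - F_{Z_t'}(z_j)\big)$. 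Comparing with the previous paragraph, the PDF-gradient step preserves equivalence at this transition exactly when $\sum_{j=1}^{K-1}\big(F_{Z_t'}(z_j) - F_{Z_t}(z_j)\big)\,(2c\alpha_t' w_j + \alpha_t c) = 0$. This is a single linear equation in the CDF residual $F_{Z_t'} - F_{Z_t}$; since $(w_j)_j$, and hence $(2c\alpha_t' w_j + \alpha_t c)_j$, is non-constant, the equation is not implied by the induction hypothesis nor by $F_{Z_t},F_{Z_t'}$ being genuine categorical CDFs, which otherwise leave the residual free to violate it.

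To finish, fix any step-size $\alpha'$ and set the SARSA step to $\alpha_0 = 2c\alpha'$ (the value that makes the CDF-gradient variant exactly mean-preserving, so that any failure is attributable to the switch to the PDF gradient). Take a $c$-spaced support with $K\ge 3$ atoms and a deterministic two-state chain $x_0 \to x_1$ with $x_1$ absorbing, with reward $r_0$ and discount $\gamma$ chosen so that the target $r_0 + \gamma Z_0(x_1)$ is already a categorical distribution on $\mathbf{z}$, making $\Pi_C$ act as the identity on it (using the bracketing assumption, or simply an on-grid reward). Let $Z_0(x_0)$ put mass on more than one atom (for instance, uniform on the two extreme atoms), and set $Q_0 := \mathbb{E}[Z_0]$ at every state, so that $Z_0\expeq Q_0$ holds everywhere. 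Then a direct computation from the equivalence-preservation condition above gives $\mathbb{E}[Z_1(x_0)] - Q_1(x_0) = \sum_j\big(F_{Z_0'}(z_j) - F_{Z_0}(z_j)\big)(2c\alpha' w_j + \alpha_0 c)$, which for this choice is a nonzero multiple of $\alpha'$; hence $Z_1\nexpeq Q_1$, and this holds for every $\alpha'\ne 0$. The main obstacle I anticipate is entirely bookkeeping: fixing the sign conventions in $\nabla_P\ell_2^2$ and in the normalisation of the PDF update, choosing $r_0,\gamma$ so the projected target is handled exactly, and checking that the proposed $Z_0,Q_0$ really agree in expectation at \emph{every} state-action pair, so that the one mismatched transition is the genuine source of $Z_1\nexpeq Q_1$.
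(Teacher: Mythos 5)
Your proposal is correct in its core mechanism and arrives at the same obstruction as the paper, but by a genuinely different route. The paper's proof is a single numerical counterexample: on the support $\mathbf{z}=(0,1,2)$ it picks $F=(\tfrac13,\tfrac23,1)$ and $F'=(\tfrac12,\tfrac12,1)$ with \emph{equal means}, so the SARSA increment is exactly zero for any step-size, and then checks directly that the PMF step produces $(\tfrac13,\tfrac13-\tfrac{\alpha}{3},\tfrac13)$, whose mean is $1-\tfrac{\alpha}{3}\neq 1$. Choosing matched means is what makes the paper's argument short: it decouples the counterexample from the relationship between $\alpha'$ and the SARSA step-size. You instead derive the general mean displacement of the PMF step, $2c\alpha'\sum_j d_j w_j$ with $w_j=\sum_{i\le j}z_i$, compare it to the required displacement $-\alpha_t c\sum_j d_j$, and exhibit the failure as a nontrivial linear constraint $\sum_j d_j(2c\alpha' w_j+\alpha_t c)=0$ on the CDF residual. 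This is more informative — it explains \emph{why} the CDF gradient works (constant weights) and the PMF gradient does not (weights built from partial sums of the support) — at the cost of having to fix the $\alpha'$--$\alpha_t$ pairing and verify nonvanishing for a concrete instance.

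One genuine, though fixable, gap: your claim that the weights $w_j$ are ``non-constant in $j$ as soon as $K\ge 3$'' is false for some supports, and this matters because you leave the support unspecified. Since $w_{j+1}-w_j=z_{j+1}$, the weights are constant whenever the interior atoms vanish; e.g.\ on $\mathbf{z}=(-1,0,1)$ one gets $w_1=w_2=-1$, and with your pairing $\alpha_t=2c\alpha'$ the constraint $\sum_j d_j(w_j+c)=0$ holds identically — the PMF gradient update \emph{does} preserve expectation-equivalence on that support, so your construction would prove nothing there. Likewise your final assertion that the discrepancy ``is a nonzero multiple of $\alpha'$'' is stated but not checked. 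Both issues disappear if you commit to a support such as $(0,1,2)$ (where $w+c=(1,2)$) and verify that your residual satisfies $d_1+2d_2\neq 0$; with $Z_0(x_0)$ uniform on the extreme atoms and a Dirac target at the middle atom one gets $d=(-\tfrac12,\tfrac12)$ and $d_1+2d_2=\tfrac12\neq 0$, completing the argument.
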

\begin{proof}Suppose we have a support $\textbf{z}   = (0,1,2)$ and two CDFs: \[F' = (\frac{1}{2}, \frac{1}{2}, 1)\] and \[F = (\frac{1}{3}, \frac{2}{3}, 1).\] 
We note that the expected values of $F$ and $F'$ are both 1. Taking the gradient of the squared \cramer distance between the two distributions with respect to the PMF of the first gives $\nabla_p \ell_2^2(F, F') = (0, -\frac{1}{3}, 0)$. Now, when we consider 
\[P + \alpha \nabla \ell_2^2(P, P') \approx (\frac{1}{3},\frac{1}{3} -\frac{\alpha}{3}, \frac{1}{3})\]
 we can immediately observe that this is not a probability distribution as it has expected value $1 - \frac{\alpha}{3}$. The expectations of $P$ and $P'$ are both 1, so a \cramer distance update w.r.t. the CDFs would give a new distribution with expectation 1, as would an update which only looked at their expectations.
 \end{proof}
\begin{prop}\label{linear}
Let $Z_0 \in \cZ_\phi$ and $Q_0 \in \cQ_\phi$, and suppose that $Z_0 \expeq Q_0$. Let $W_t, \theta_t$ respectively denote the weights corresponding to $Z_t$ and $Q_t$. If $Z_{t+1}$ is computed from the semi-gradient update rule
\begin{equation*}
    W_{t+1} := W_t + \alpha (F_{Z_t'} - W_t\phi_{x_t, a_t} ) \phi_{x_t, a_t}^T
\end{equation*}
and $Q_{t+1}$ is computed according to Equation \ref{semigradient_q} with the same step-size $\alpha$, then also $Z_t \expeq Q_t \; \forall \; t \in \mathbb{N}$.
\end{prop}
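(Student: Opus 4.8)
The plan is to mimic the inductive structure of Propositions~\ref{mixture} and \ref{mixture_projection}: the base case $\mathbb{E}[Z_0(x,a)] = Q_0(x,a)$ holds by assumption, and the work is in the inductive step. The key enabling observation is that the mean of a categorical distribution is a \emph{linear} functional of its vector of CDF values. Writing $F = (F(z_1),\dots,F(z_K))$ for the CDF values on the support $\mathbf{z}$ and $p_i = F(z_i) - F(z_{i-1})$ with $F(z_0):=0$, Abel summation gives $\mathbb{E}[Z] = \sum_{i=1}^K z_i p_i = z_K F(z_K) + \sum_{i=1}^{K-1}(z_i - z_{i+1})F(z_i) = \mathbf{v}^\top F$ for the fixed vector $\mathbf{v}$ with $v_i = z_i - z_{i+1}$ for $i<K$ and $v_K = z_K$. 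Thus there is a single linear map $e(F) := \mathbf{v}^\top F$ with $\mathbb{E}[Z_t(x,a)] = e(W_t \phi_{x_t,a_t})$ — more precisely $e(W_t\phi_{x,a})$ — and $e$ applied to any CDF along the way returns the mean of the associated distribution.

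Next I would pin down the scalar TD target. The CDF target is $F_{Z_t'} = F_{\Pi_C(r_t + \gamma Z_t(x_{t+1},a_{t+1}))}$, so, assuming as in the earlier propositions that $\mathbf{z}$ brackets the attainable returns, Proposition~\ref{projectionexpectation} together with the inductive hypothesis gives $e(F_{Z_t'}) = \mathbb{E}[r_t + \gamma Z_t(x_{t+1},a_{t+1})] = r_t + \gamma Q_t(x_{t+1},a_{t+1}) =: y_t$, which is exactly the target appearing in the semi-gradient $Q$-update of Equation~\ref{semigradient_q}.

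The inductive step is then a short computation. The weight update $W_{t+1} = W_t + \alpha(F_{Z_t'} - W_t\phi_{x_t,a_t})\phi_{x_t,a_t}^\top$ is rank one, so for an arbitrary pair $(x,a)$,
\[
F_{Z_{t+1}}(x,a) = W_{t+1}\phi_{x,a} = W_t\phi_{x,a} + \alpha\,(\phi_{x_t,a_t}^\top \phi_{x,a})\,(F_{Z_t'} - W_t\phi_{x_t,a_t}).
\]
Applying $e$, using its linearity, the inductive hypothesis $e(W_t\phi_{x,a}) = Q_t(x,a)$ (and at $(x_t,a_t)$), and $e(F_{Z_t'}) = y_t$ from the previous paragraph, yields
\[
\mathbb{E}[Z_{t+1}(x,a)] = Q_t(x,a) + \alpha\,(\phi_{x_t,a_t}^\top \phi_{x,a})\,(y_t - Q_t(x_t,a_t)),
\]
which is precisely the value the semi-gradient rule of Equation~\ref{semigradient_q} assigns to $Q_{t+1}(x,a)$, since that update has the same rank-one form with the same scalar factor $\phi_{x_t,a_t}^\top\phi_{x,a}$ and the same residual. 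Hence $\mathbb{E}[Z_{t+1}(x,a)] = Q_{t+1}(x,a)$ for all $(x,a)$, closing the induction.

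The part requiring the most care — and the reason this is not subsumed by Proposition~\ref{mixture} — is that, unlike the tabular updates, the semi-gradient step is a \emph{global} rank-one perturbation of $W_t$ that changes the predicted distribution at every state-action pair, not just $(x_t,a_t)$. So the identity must be checked for all $(x,a)$ at once, and the argument leans on $e$ being linear (so it passes cleanly through the rank-one term) and on the $W$- and $\theta$-updates sharing identical structure up to the intertwining by $e$. One should also note in passing that the $K$-th coordinate of $W_t\phi_{x,a}$ remains $1$, so these vectors are genuine CDFs: since $F_{Z_t'}$ and, inductively, $W_t\phi_{x_t,a_t}$ have last coordinate $1$, the update to that coordinate vanishes. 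This is not needed for the mean identity, which is purely linear-algebraic, but is reassuring for the interpretation of $Z_{t+1}$ as a distribution.
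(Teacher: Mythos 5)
Your proof is correct and follows essentially the same route as the paper's: your linear functional $e(F)=\mathbf{v}^\top F$ obtained by Abel summation is exactly the paper's map $z^T C^{-1}$ (with $C$ the lower-triangular all-ones matrix), and both arguments push this functional through the rank-one weight update and match the result to the $Q$-update by induction. Your justification of $e(F_{Z_t'}) = r_t + \gamma Q_t(x_{t+1},a_{t+1})$ via Proposition~\ref{projectionexpectation} is slightly more explicit than the paper's, which simply assumes $z^T C^{-1}F_t = v_t$, but this is a refinement rather than a different approach.
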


\begin{proof}

We first note that we can compute the expected value of the distribution $W\phi_{x_t, a_t}$ directly by using the linear map $z^TC^{-1}$, where $C$ is a lower-triangular all-ones matrix (see \cite{signed} for details). So \[\mathbb{E}[Z_t(x_t, a_t)] = z^TC^{-1}W\phi_{x_t, a_t}.\]
We let $F_t$ and $v_t$ denote the distributional and expected TD targets respectively. Now, we observe that for any state-action pair $(x', a')$:
\begin{align*}
    \mathbb{E}[Z_{t+1}(x', a')] &= z^TC^{-1}W_{t+1}\phi_{x', a'} \\
    &= z^TC^{-1}(W_t \phi_{x', a'}) \\ &+\alpha z^TC^{-1}(F_t - W_t\phi_{x_t, a_t})\phi_{x_t, a_t}^T\phi_{x', a'}
\end{align*}
We note that by our assumption that all predicted distributions sum to 1, the expected value of the target signed measure $F_t$ given by $z^TC^{-1}F_t$ is equal to the target value function $v_t$. So to prove equality of expectation, it suffices to show
\begin{align*}&\alpha z^TC^{-1}(F_t - W_t\phi_{x_t, a_t})\phi_{x_t, a_t}^T \phi_{x', a'} =\\ &\alpha (\theta^T\phi_{x_t, a_t} - v_t)\phi_{x_t, a_t}^T\phi_{x', a'}.\end{align*}
We proceed as follows.
\begin{align*}
   &z^TC^{-1}( (W\phi_{x_t, a_t} - F_t)\phi_{x_t, a_t}^T)\phi_{x', a'} \\
    &=(z^TC^{-1}W\phi_{x_t, a_t} - z^TC^{-1}F_t)\phi_{x_t, a_t}^T\phi_{x', a'} \\
    \intertext{By assumption $Z_t \expeq Q_t$, and so $z^TC^{-1}W\phi_{x_t, a_t} = \theta^T\phi_{x_t, a_t}$. Further, as we also assume $z^TC^{-1}F_t = v_t$, we get that the difference becomes identical to the Q-value update.}
    &= (\theta^T\phi_{x_t, a_t} - v_t)\phi_{x_t, a_t}^T\phi_{x', a'} \qedhere \\
\end{align*}
\end{proof}

\begin{prop}\label{nonlinear-lemma}
There exists a (nonlinear) representation of the cumulative distribution function parametrized by $W \in \bR^{K\times d}$ such that $Z_0 \expeq Q_0$ but after applying the semi-gradient update rule
\begin{equation*}
    W_{t+1} := W_t + \alpha  \grad_W \ell_2^2( \psi(W, \phi(x_t, a_t)), F_{Z_t'}),
\end{equation*}
where $F_{Z_t'}$ is the cumulative distribution function of the projected Bellman target,
we have $Z_1 \nexpeq Q_1$.
\end{prop}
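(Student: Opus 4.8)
The plan is to construct a counterexample, reverse-engineered from why Proposition~\ref{linear} holds. In the linear case the parameter update is $\alpha(F_{Z_t'} - W_t\phi_{x_t,a_t})\phi_{x_t,a_t}^T$, i.e.\ a displacement of the represented CDF itself along the residual $F_{Z_t'} - F_{Z_t}$; since the mean functional $z^TC^{-1}$ (the linear map recovering $\mathbb{E}[Z]$ from a CDF, as in the proof of Proposition~\ref{linear}) is linear, this produces exactly the mean change $\alpha(v_t - \mathbb{E}[Z_t(x_t,a_t)])\,\phi_{x_t,a_t}^T\phi$ of the semi-gradient $Q$-update, Equation~\ref{semigradient_q}. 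When the CDF is a nonlinear function $\psi(W,\phi)$ of $W$, the semi-gradient step is instead the Jacobian-transpose of $\psi$ applied to $\nabla_F\ell_2^2$, which by Proposition~\ref{gradient_cdf} is componentwise proportional to the CDF residual; the new CDF $\psi(W_{t+1},\phi)$ then differs from $\psi(W_t,\phi)$ by a vector that is \emph{not} proportional to that residual, so composing with the mean functional need not return $\alpha(v_t - \mathbb{E}[Z_t])$. It remains to exhibit one nonlinear $\psi$ and one one-step instance where the mismatch is provable.

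Concretely, take a minimal instance: support $\mathbf{z} = (-1,0,1)$ (so $K = 3$, $c = 1$, and $\mathbf{z}$ brackets the attainable returns for a suitably small MDP), and tabular (one-hot) features, so that each state--action pair $(x,a)$ carries an independent parameter column $w^{(x,a)} = W\phi(x,a) \in \bR^K$; for the nonlinear CDF representation set $p^{(x,a)} = \mathrm{softmax}(w^{(x,a)}) \in \bR^K$ and $\psi(W,\phi(x,a))_j = \sum_{l=1}^{j} p^{(x,a)}_l$, manifestly a valid CDF with last coordinate $1$ and genuinely nonlinear in $W$. With one-hot features the $Q$-update (Equation~\ref{semigradient_q}) at $(x_t,a_t)$ is the tabular update $Q_1(x_t,a_t) = Q_0(x_t,a_t) + \alpha(v_t - Q_0(x_t,a_t))$ with $v_t = r_t + \gamma Q_0(x_{t+1},a_{t+1})$. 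Choose $W_0$ so that $\mathbb{E}[Z_0(x,a)] = z^TC^{-1}\psi(W_0,\phi(x,a))$ equals $Q_0(x,a)$ for every pair (one independent scalar constraint per pair), and choose the MDP so that $v_t \ne Q_0(x_t,a_t)$, hence $Q_1(x_t,a_t) \ne Q_0(x_t,a_t)$. Let $F_{Z_0'}$ be the CDF of $\Pi_C(r_t + \gamma Z_0(x_{t+1},a_{t+1}))$, whose mean is $v_t$ by Proposition~\ref{projectionexpectation}. Finally form $W_1 = W_0 + \alpha\,\grad_W \ell_2^2(\psi(W_0,\phi(x_t,a_t)), F_{Z_0'})$ --- using that $\nabla_F\ell_2^2$ is componentwise proportional to $\psi(W_0,\phi(x_t,a_t)) - F_{Z_0'}$ and the chain rule through the softmax --- and read off $\mathbb{E}[Z_1(x_t,a_t)] = z^TC^{-1}\psi(W_1,\phi(x_t,a_t))$.

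To make the inequality transparent I would put $w^{(x_t,a_t)}_0 := W_0\phi(x_t,a_t)$ in the saturated regime, with $\mathrm{softmax}(w^{(x_t,a_t)}_0)$ within total-variation distance $\epsilon$ of a standard basis vector $e_{i^\star}$ --- so $Q_0(x_t,a_t)$ is forced within $O(\epsilon)$ of the support point $z_{i^\star}$, which we simply take as the definition of $Q_0(x_t,a_t)$. There the softmax Jacobian $\mathrm{diag}(p) - pp^T$ is $O(\epsilon)$, so $\|W_1 - W_0\| = O(\alpha\epsilon)$ and, after a second pass through the $O(\epsilon)$ Jacobian (plus $O(\epsilon)$ curvature terms), $\psi(W_1,\phi(x_t,a_t)) - \psi(W_0,\phi(x_t,a_t)) = O(\alpha\epsilon^2)$, hence $\mathbb{E}[Z_1(x_t,a_t)] = Q_0(x_t,a_t) + O(\alpha\epsilon^2)$. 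Meanwhile $Q_1(x_t,a_t) - Q_0(x_t,a_t) = \alpha(v_t - Q_0(x_t,a_t))$ is a fixed nonzero number, independent of $\epsilon$. Taking $\epsilon$ small enough keeps $|\mathbb{E}[Z_1(x_t,a_t)] - Q_1(x_t,a_t)|$ bounded away from $0$, so $Z_1 \nexpeq Q_1$. (Equivalently, one can drop the asymptotics and just run the $3\times 3$ arithmetic for one fixed numerical choice of $W_0$, $\alpha$, and $F_{Z_0'}$.)

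I expect the main obstacle to be bookkeeping rather than ideas: one must check that the chosen $\psi$ is simultaneously a genuine CDF representation (monotone, in $[0,1]$, last coordinate $1$) and essentially nonlinear, that $Z_0 \expeq Q_0$ can be arranged together with $v_t \ne Q_0(x_t,a_t)$ and with $F_{Z_0'}$ a legitimate projected Bellman target (this is where the bracketing / reward magnitude must be pinned down), and --- the one truly quantitative point --- that the mean perturbation caused by the saturated nonlinear step is strictly smaller in magnitude than $\alpha|v_t - Q_0(x_t,a_t)|$, so that no accidental cancellation restores $Z_1 \expeq Q_1$. Everything else is a direct finite computation with the $z^TC^{-1}$ representation of the mean borrowed from the proof of Proposition~\ref{linear}.
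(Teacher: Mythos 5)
Your proposal is correct, but it reaches the counterexample by the opposite mechanism from the paper. The paper arranges for the target and the prediction to have \emph{equal} means, so that the expected ($Q$) update is exactly zero; it then exhibits a concrete sigmoid parametrization $\psi_W(x) = [\sigma(w_1x_1), \sigma(w_2x_2), 1]$ on $\mathbf{z}=(-1,0,1)$ with specific numerical weights and target, and verifies by direct arithmetic that the distributional semi-gradient step nonetheless moves the predicted mean (to roughly $-0.05$), so $Z_1 \nexpeq Q_1$. You do the mirror image: you make the $Q$-update a fixed nonzero step $\alpha(v_t - Q_0(x_t,a_t))$ and instead suppress the distributional mean-shift by saturating a softmax parametrization near a vertex of the simplex, so that the Jacobian $\mathrm{diag}(p)-pp^T$ is $O(\epsilon)$, the parameter step is $O(\alpha\epsilon)$, and the induced change in the represented CDF (hence in $z^TC^{-1}\psi$) is $O(\alpha\epsilon^2)$, which cannot match the $\Theta(1)$ change in $Q$. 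Your route trades the paper's one-shot numerical verification for a short asymptotic estimate: it costs you the Jacobian/curvature bookkeeping (which you correctly flag, and which does go through since the softmax and its derivatives are globally Lipschitz), but it buys robustness --- there is no specific inequality between two computed decimals to check, since one side tends to $0$ while the other is fixed --- and it isolates the structural reason the linear case (Proposition \ref{linear}) is special: for nonlinear $\psi$ the displacement of the represented CDF is no longer the residual itself, so the linear mean functional $z^TC^{-1}$ need not reproduce the $Q$-update. Both arguments are valid proofs of the existence claim.
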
 n
We present a concrete example where this is the case. For simplicity, the example will be one in which the target distribution is equal in expectation to the predicted distribution, but has a different law. Thus the update to the expected parameters will be zero, but the update to the distributional parameters will be non-zero, and if this update changes the expected value of the predicted distribution, then the new distributional prediction will disagree with the new expected prediction.

We will denote by $\sigma(y)$ the sigmoid function 
\[ \sigma(y) = \frac{1}{1 + e^{-y}}.\]
Let $\textbf{z} = (-1, 0, 1).$
Let $W = (w_1, w_2)$, and set \[\psi_W(x) := [\sigma(w_1 x_1), \sigma(w_2 x_2), 1]\] corresponding to $F(-1), F(0), F(1)$, with \[W_0 = [-\ln(2), -\ln(1/2)/2].\]

Set \[\psi_\theta(\phi_{x,a}) := z^TC^{-1} \psi_W(\phi_{x,a})\]
We sample a transition starting from $\phi_{x_t, a_t}$ and compute target distribution $F_t$ with values
\[\phi_{x_t, a_t} = (1,2) \text{ and }F = [0,1, 1]\] 
respectively. Then $\theta$ remains the same but the expected value of $\psi$ changes when we perform a gradient update. 
We first calculate the TD(0) semi-gradient with respect to the parameters $W$:
{\small
\begin{align*}
   \nabla_W[1] &= (F(-1) - F_\psi(-1))\frac{\partial}{\partial W_1}(F_\psi(-1))  \\
    &= (0 - \frac{1}{3}) \sigma(\ln(2))(1 - \sigma(\ln(2))) (-1) \\
    &=\frac{2}{27} \\
    \nabla_W[2] &=(F(0) - F_\psi(0))\frac{\partial}{\partial W_1}(F_\psi(0)) \\
    &= (1 - \frac{2}{3}) \sigma(-\ln(2))(1 - \sigma(-\ln(2))) (-2) \\
    &=\frac{-4}{27}
\end{align*}}
Let $\alpha = 1, W_{t+1} = W_t + \alpha \nabla_W \ell_2^2(\psi_W(\phi_{x,a}), F)$. Then we claim that the expected value of the new random variable $Z_{t+1}(1,2)$ denoted $F_{\psi '(1,2) }$ is different from the expectation of $F_{\psi(1,2)}$. To see this, consider:
\begin{align*}
    \mathbb{E}[Z_{t+1}(1,2)] &= (-1 )p_{\psi'(1,2)}(z_1) + (1 )p_{\psi'(1,2)}(z_3) \\
    &= -F_{\psi'}(z_1) + (1 - F_{\psi'}(z_2))\\
    &= \frac{-1}{1 + e^{(\ln(2) + 2/27)(1)}} + 1 \\
    &- \frac{1}{1 + e^{(\ln(\frac{1}{2})/2 -4/27)(2)}}\\ &\approx -0.05 \neq 0 = \mathbb{E}[Q_{t+1}(1,2)]
\end{align*}
And so $\mathbb{E}[Z_{t+1}(\phi_{x_t, a_t})] \neq Q_{t+1}(\phi_{x_t, a_t})$

\section{Additional Experimental Results}
We first present preliminary results from the gridworld experiment described in section 5.1. We set all of our agents to initially predict the same value for each state-action pair on each random seed, and then allow the agents to update their predictions using their respective update rules and take actions according to an $\epsilon$-greedy policy. We note that we no longer constrain all of the agents to take the same trajectory but couple them to all use the same random seed. Thus, if two agents always agree on the optimal action, they will attain the exact same performance in the gridworld. This is what we see in the plot below. Indeed, in the gridworld problem the difference between updating by the gradient of the PMF only marginally alters performance. The agent's objective in the gridworld environment is to reach the goal state in as few steps as possible, and fewer steps per episode indicates that the agent has learned the most direct route in the graph below. 
\begin{figure}
\centering
\includegraphics[width=200pt]{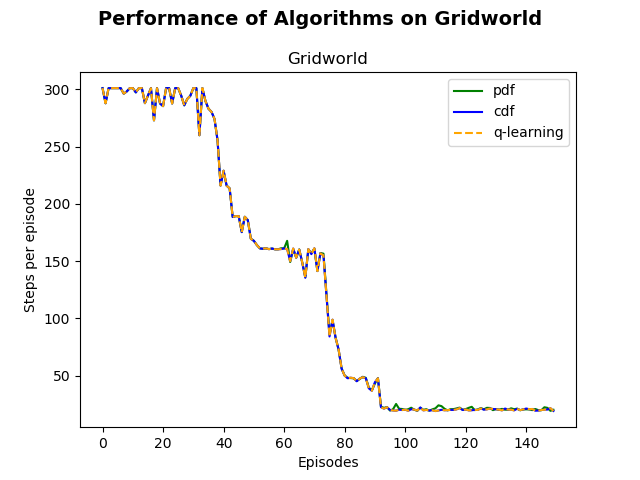}
 \caption{An environment where PMF updates perform well}
    \label{fig:gridworld}
\end{figure}

We see a larger disparity between CDF and PDF gradient updates in a 3-state MDP, where notably rewards are significantly less sparse. The agent's goal in this environment is to take the left action in the leftmost state or the right action in the rightmost state. In the 3-state MDP we relax the randomization coupling slightly and average over 5 runs in the MDP. We observe that although initially the PDF gradient updates perform similarly to the CDF and Q-learning gradient updates, they more often result in sub-optimal trajectories as training progresses. In contrast, the CDF updates continue to produce the same average performance as q-learning.
\begin{figure}[H]
    \centering
    \includegraphics[width=200pt]{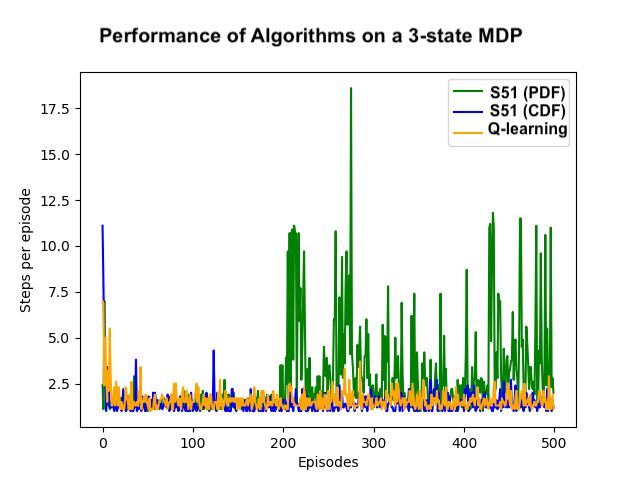}
    \caption{An environment where PMF updates suffer}
    \label{fig:threestate}
\end{figure}
\begin{figure}[H]
    \centering
    \includegraphics[width=100pt]{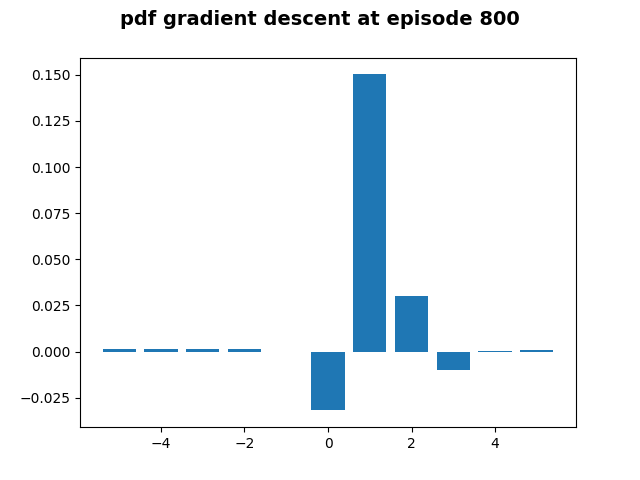}
    \includegraphics[width=100pt]{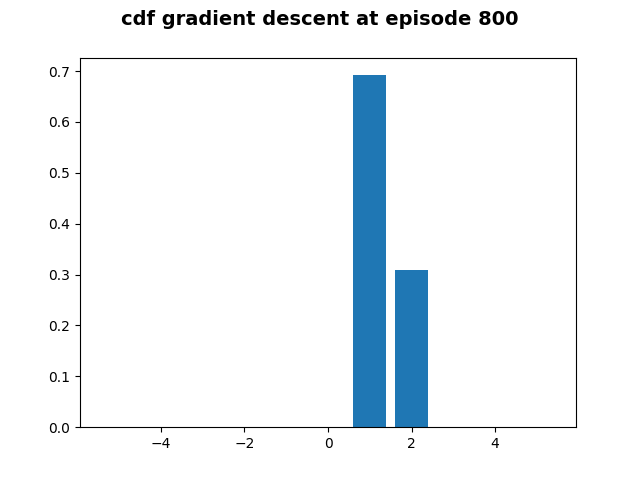}
    \caption{Predictions of value of goal state in 3-state MDP environment}
    \label{fig:predictions}
\end{figure}
We observe here that this worse performance occurs in conjunction with a predicted `distribution' that does not resemble a probability distribution, having negative probabilities which do not integrate to 1. In contrast, the predictions output by the agent which models the CDF are always proper distributions.

\end{document}